\documentclass{article} 
\usepackage{iclr2021_conference,times}


\usepackage{amsmath,amsfonts,bm}









\def\eqref#1{equation~\ref{#1}}









\def\1{\bm{1}}










\DeclareMathAlphabet{\mathsfit}{\encodingdefault}{\sfdefault}{m}{sl}
\SetMathAlphabet{\mathsfit}{bold}{\encodingdefault}{\sfdefault}{bx}{n}











\newcommand{\KL}{D_{\mathrm{KL}}}



\usepackage{hyperref}
\usepackage{url}
\usepackage{graphicx}
\usepackage{caption}
\usepackage{subcaption}
\graphicspath{{./media/}}
\usepackage{amsmath,amsthm,amssymb,amsfonts}
\usepackage[ruled,noline]{algorithm2e}
\SetKwInOut{KwParameters}{Parameters}

\usepackage{comment}
\newtheorem{observation}{Observation}
\pdfstringdefDisableCommands{%
  \def\({}%
  \def\){}%
}

\title{Contrastive Divergence Learning is a Time Reversal Adversarial Game}

\author{
Yair Omer \\
Department of Electrical Engineering \\
Technion - Israel Institute of Technology\\
Haifa, Israel\\
\texttt{omeryair@gmail.com} \\
\And
Tomer Michaeli \\
Department of Electrical Engineering \\
Technion - Israel Institute of Technology\\
Haifa, Israel\\
\texttt{tomer.m@ee.technion.ac.il} \\
}

\iclrfinalcopy 

\newcommand{\eg}{\emph{e.g.,} }
\newcommand{\ie}{\emph{i.e.,} }

\begin{document}

\maketitle

\begin{abstract}

Contrastive divergence (CD) learning is a classical method for fitting unnormalized statistical models to data samples. Despite its wide-spread use, the convergence properties of this algorithm are still not well understood. The main source of difficulty is an unjustified approximation which has been used to derive the gradient of the loss. In this paper, we present an alternative derivation of CD that does not require any approximation and sheds new light on the objective that is actually being optimized by the algorithm. Specifically, we show that CD is an adversarial learning procedure, where a discriminator attempts to classify whether a Markov chain generated from the model has been time-reversed. Thus, although predating generative adversarial networks (GANs) by more than a decade, CD is, in fact, closely related to these techniques. Our derivation settles well with previous observations, which have concluded that CD's update steps cannot be expressed as the gradients of any fixed objective function. In addition, as a byproduct, our derivation reveals a simple correction that can be used as an alternative to Metropolis-Hastings rejection, which is required when the underlying Markov chain is inexact (\eg when using Langevin dynamics with a large step).

\end{abstract}


\section{Introduction}

Unnormalized probability models have drawn significant attention over the years. These models arise, for example, in energy based models, where the normalization constant is intractable to compute, and are thus relevant to numerous settings. Particularly, they have been extensively used in the context of restricted Boltzmann machines \citep{smolensky1986information,hinton2002training}, deep belief networks \citep{hinton2006fast,salakhutdinov2009deep}, Markov random fields \citep{carreira2005contrastive,hinton2006reducing}, and recently also with deep neural networks \citep{xie2016theory,song2019generative,du2019implicit,grathwohl2019your,nijkamp2019learning}.

Fitting an unnormalized density model to a dataset is challenging due to the missing normalization constant of the distribution. A naive approach is to employ approximate maximum likelihood estimation (MLE). This approach relies on the fact that the likelihood's gradient can be approximated using samples from the model, generated using Markov Chain Monte Carlo (MCMC) techniques. However, a good approximation requires using very long chains and is thus impractical. This difficulty motivated the development of a plethora of more practical approaches, like score matching \citep{hyvarinen2005estimation}, noise contrastive estimation (NCE) \citep{gutmann2010noise}, and conditional NCE (CNCE) \citep{ceylan2018conditional}, which replace the log-likelihood loss with objectives that do not require the computation of the normalization constant or its gradient.

Perhaps the most popular method for learning unnormalized models is contrastive divergence (CD) \citep{hinton2002training}. CD's advantage over MLE stems from its use of short Markov chains initialized at the data samples. CD has been successfully used in a wide range of domains, including modeling images \citep{hinton2006fast}, speech \citep{mohamed2010phone}, documents \citep{hinton2009replicated}, and movie ratings \citep{salakhutdinov2007restricted}, and is continuing to attract significant research attention \citep{liu2017learning,gao2018learning,qiu2019unbiased}.

Despite CD's popularity and empirical success, there still remain open questions regarding its theoretical properties. The primary source of difficulty is an unjustified approximation used to derive its objective's gradient, which biases its update steps \citep{carreira2005contrastive,bengio2009justifying}. The difficulty is exacerbated by the fact that CD's update steps cannot be expressed as the gradients of any fixed objective \citep{tieleman2007some,sutskever2010convergence}.

In this paper, we present an alternative derivation of CD, which relies on completely different principles and requires no approximations. Specifically, we show that CD's update steps are the gradients of an adversarial game in which a discriminator attempts to classify whether a Markov chain generated from the model is presented to it in its original or a time-reversed order (see Fig.~\ref{fig:adverserial_cd}). Thus, our derivation sheds new light on CD's success: Similarly to modern generative adversarial methods \citep{goodfellow2014generative}, CD's discrimination task becomes more challenging as the model approaches the true distribution. This keeps the update steps effective throughout the entire training process and prevents early saturation as often happens in non-adaptive methods like NCE and CNCE. In fact, we derive CD as a natural extension of the CNCE method, replacing the fixed distribution of the contrastive examples with an adversarial adaptive distribution. 

\begin{figure}[t]
    \centering
    \includegraphics[width=\textwidth]{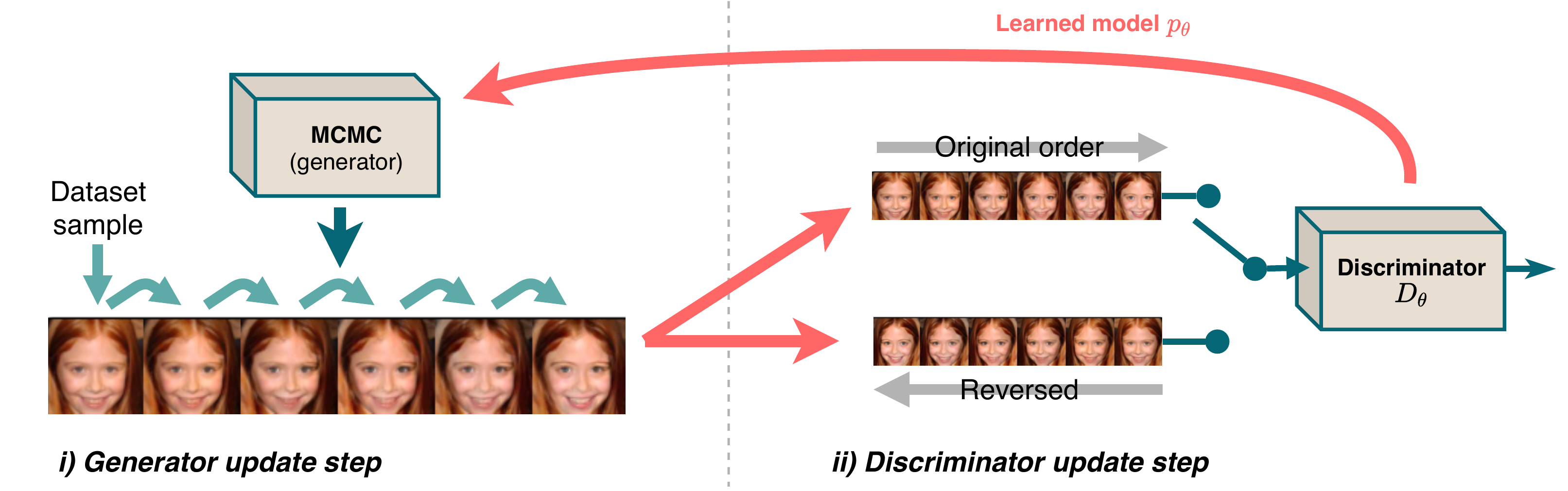}
    \caption{
        Contrastive divergence as an adversarial process. In the first step, the distribution model is used to generate an MCMC process which is used to generate a chain of samples. In the second step the distribution model is updated using a gradient descent step, using the MCMC transition rule.
    }
    \label{fig:adverserial_cd}
\end{figure}

CD requires that the underlying MCMC be exact, which is not the case for popular methods like Langevin dynamics. This commonly requires using Metropolis-Hastings (MH) rejection, which ignores some of the generated samples. Interestingly, our derivation reveals an alternative correction method for inexact chains, which does not require rejection.


\section{Background}

\subsection{The Classical Derivation of CD}

Assume we have an unnormalized distribution model $p_\theta$. Given a dataset of samples $\{x_i\}$ independently drawn from some unknown distribution $p$, CD attempts to determine the parameters $\theta$ with which $p_\theta$ best explains the dataset. Rather than using the log-likelihood loss, CD's objective involves distributions of samples along finite Markov chains initialized at~$\{x_i\}$. When based on chains of length $k$, the algorithm is usually referred to as CD-$k$.

Concretely, let $q_\theta(x'|x)$ denote the transition rule of a Markov chain with stationary distribution $p_\theta$, and let $r^m_\theta$ denote the distribution of samples after $m$ steps of the chain. As the Markov chain is initialized from the dataset distribution and converges to $p_\theta$, we have that $r_\theta^0=p$ and $r_\theta^\infty=p_\theta$. The CD algorithm then attempts to minimize the loss
\begin{align}
\label{eq:cd_loss}
\ell_{\text{CD-}k}
&=\KL(r^0_\theta||r^{\infty}_\theta)-\KL(r^k_\theta||r^{\infty}_\theta)\nonumber\\ 
&=\KL(p||p_{\theta})-\KL(r^{k}_\theta||p_{\theta}),
\end{align}
where $\KL$ is the Kullback-Leibler divergence. Under mild conditions on $q_\theta$ \citep{cover1994processes} this loss is guaranteed to be positive, and it vanishes when $p_{\theta}=p$ (in which case $r^{k}_\theta=p_\theta$). 

To allow the minimization of (\ref{eq:cd_loss}) using gradient-based methods, one can write
\begin{align}
\label{eq:cd_grad}
\nabla_{\theta}\ell_{\text{CD-}k}=
&\mathbb{E}_{\tilde{X}\sim r^k_{\theta}} [\nabla_{\theta}\log p_{\theta}(\tilde{X})]
-\mathbb{E}_{X\sim p} [\nabla_{\theta}\log p_{\theta}(X)] +\frac{d\KL(r^k_{\theta}||p_{\theta})}{d r^k_{\theta}}\nabla_{\theta}r^k_{\theta}.
\end{align}
Here, the first two terms can be approximated using two batches of samples, one drawn from $p$ and one from $r^k_\theta$. The third term is the derivative of the loss with respect only to the $\theta$ that appears in $r_{\theta}^k$, ignoring the dependence of $p_{\theta}$ on $\theta$. This is the original notation from \citep{hinton2002training}; an alternative way to write this term would be $\nabla_{\tilde{\theta}}\KL(r^{k}_{\tilde{\theta}}||p_{\theta})$. This term turns out to be intractable and in the original derivation, it is argued to be small and thus neglected, leading to the approximation
\begin{equation}
\label{eq:cd_grad_approx}
\nabla_{\theta}\ell_{\text{CD-}k}\approx
\frac{1}{n}\sum_i\left(
\nabla_{\theta}\log p_{\theta}(\tilde{x}_i)
-\nabla_{\theta}\log p_{\theta}(x_i)
\right)
\end{equation}
Here $\{x_i\}$ is a batch of $n$ samples from the dataset and~$\{\tilde{x}_i\}$ are $n$ samples generated by applying $k$ MCMC steps to each of the samples in that batch. The intuition behind the resulting algorithm (summarized in App.~\ref{appendix:algo}) is therefore simple. In each gradient step $\theta\leftarrow\theta-\eta\nabla_{\theta}\ell_{\text{CD-}k}$, the log-likelihood of samples from the dataset is increased on the expense of the log-likelihood of the contrastive samples $\{\tilde{x}_i\}$, which are closer to the current learned distribution~$p_\theta$.

Despite the simple intuition, it has been shown that without the third term, CD's update rule (\ref{eq:cd_grad}) generally cannot be the gradient of any fixed objective \citep{tieleman2007some,sutskever2010convergence} except for some very specific cases. For example, \cite{hyvarinen2007connections} has shown that when the Markov chain is based on Langevin dynamics with a step size that approaches zero, the update rule of CD-1 coincides with that of score-matching \cite{hyvarinen2005estimation}. Similarly, the probability flow method of \cite{sohl2011minimum} has been shown to be equivalent to CD with a very unique Markov chain. Here, we show that regardless of the selection of the Markov chain, the update rule is in fact the exact gradient of a particular adversarial objective, which adapts to the current learned model in each step.


\subsection{Conditional Noise Contrastive Estimation}
\label{sec:cnce}

Our derivation views CD as an extension of the CNCE method, which itself is an extension of NCE. We therefore start by briefly reviewing those two methods.

In NCE, the unsupervised density learning problem is transformed into a supervised one. This is done by training a discriminator $D_\theta(x)$ to distinguish between samples drawn from $p$ and samples drawn from some preselected contrastive distribution $p_\text{ref}$. Specifically, let the random variable $Y$ denote the label of the class from which the variable $X$ has been drawn, so that $X|(Y=1)\sim p$ and $X|(Y=0)\sim p_\text{ref}$. Then it is well known that the discriminator minimizing the binary cross-entropy (BCE) loss is given by
\begin{equation}
D_\text{opt}(x)=\mathbb{P}(Y=1|X=x)=\frac{p(x)}{p(x)+p_{\text{ref}}(x)}.
\end{equation}
Therefore, letting our parametric discriminator have the form
\begin{equation}
D_\theta(x)=\frac{p_{\theta}(x)}{p_{\theta}(x)+p_{\text{ref}}(x)},
\end{equation}
and training it with the BCE loss, should in theory lead to $D_\theta(x)=D_\text{opt}(x)$ and thus to $p_\theta(x)=p(x)$. In practice, however, the convergence of NCE highly depends on the selection of $p_{\text{ref}}$. If it significantly deviates from $p$, then the two distributions can be easily discriminated even when the learned distribution $p_{\theta}$ is still very far from~$p$. At this point, the optimization essentially stops updating the model, which can result in a very inaccurate estimate for $p$. In the next section we provide a precise mathematical explanation for this behavior.

The CNCE method attempts to alleviate this problem by drawing the contrastive samples based on the dataset samples. Specifically, each dataset sample $x$ is paired with a contrastive sample~$\tilde{x}$ that is drawn conditioned on $x$ from some predetermined conditional distribution $q(\tilde{x}|x)$ (\emph{e.g.} $\mathcal{N}(x,\sigma^2I)$). The pair is then concatenated in a random order, and a discriminator is trained to predict the correct order. This is illustrated in Fig.~\ref{fig:cnce}. Specifically, here the two classes are of pairs $(A,B)$ corresponding to $(A,B)=(X,\tilde{X})$ for $Y=1$, and $(A,B)=(\tilde{X},X)$ for $Y=0$, and the discriminator minimizing the BCE loss is given by 
\begin{align}
D_\text{opt}(a,b)&=\mathbb{P}(Y=1|A=a,B=b) =\frac{q(b|a)p(a)}{q(b|a)p(a)+q(a|b)p(b)}.
\end{align}
Therefore, constructing a parametric discriminator of the form
\begin{equation}\label{eq:cnce_discriminator}
D_{\theta}(a,b)
=\frac{q(b|a)p_\theta(a)}{q(b|a)p_\theta(a)+q(a|b)p_\theta(b)}
=\left(1+\frac {q(a|b)p_{\theta}(b)}{q(b|a)p_{\theta}(a)}\right)^{-1},
\end{equation}
and training it with the BCE loss, should lead to $p_\theta\propto p$. Note that here $D_{\theta}$ is indifferent to a scaling of $p_{\theta}$, which is thus determined only up to an arbitrary multiplicative constant.

\begin{figure}
    \centering
    \begin{subfigure}[b]{0.5\textwidth}\centering
        \includegraphics[width=0.95\textwidth]{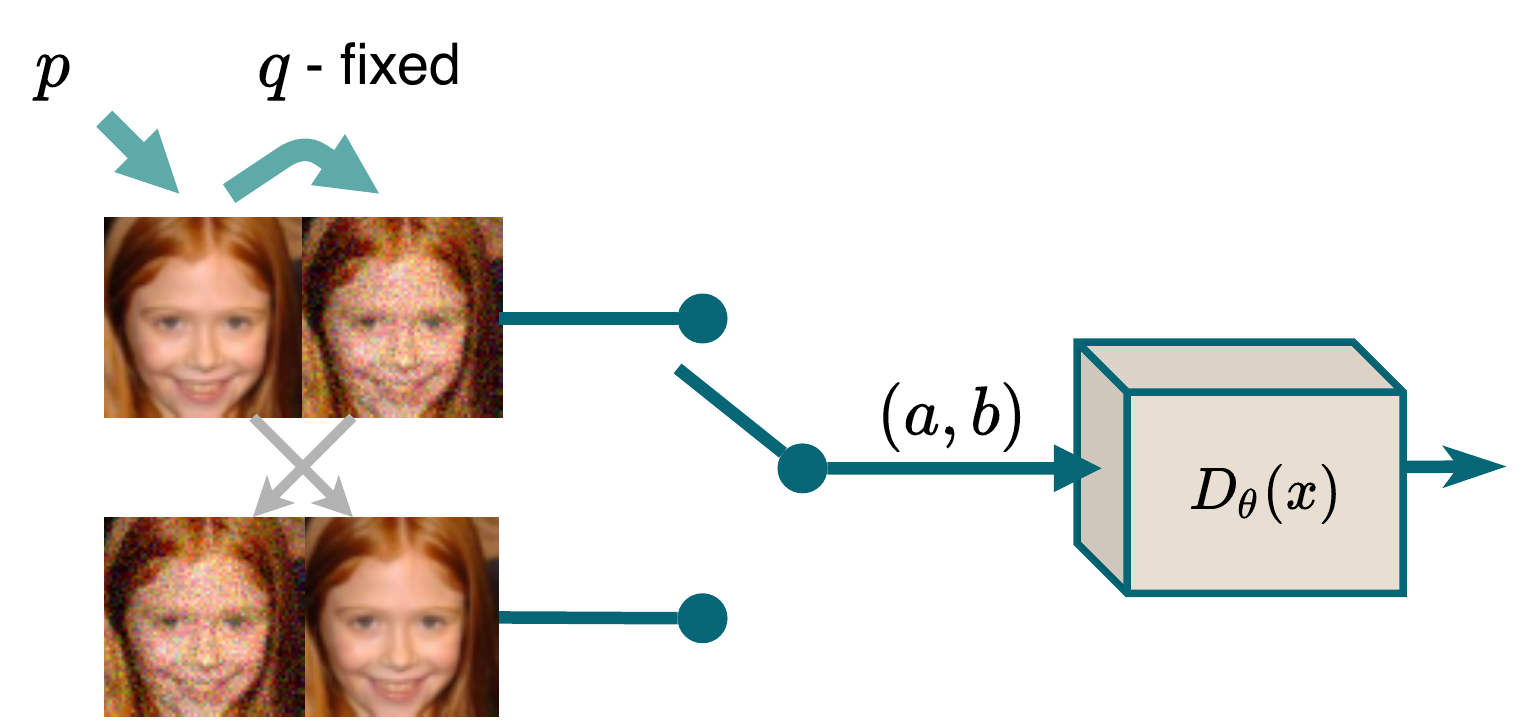}
        \caption{CNCE}\label{fig:cnce}
    \end{subfigure}
    \begin{subfigure}[b]{0.5\textwidth}\centering
        \includegraphics[width=0.95\textwidth]{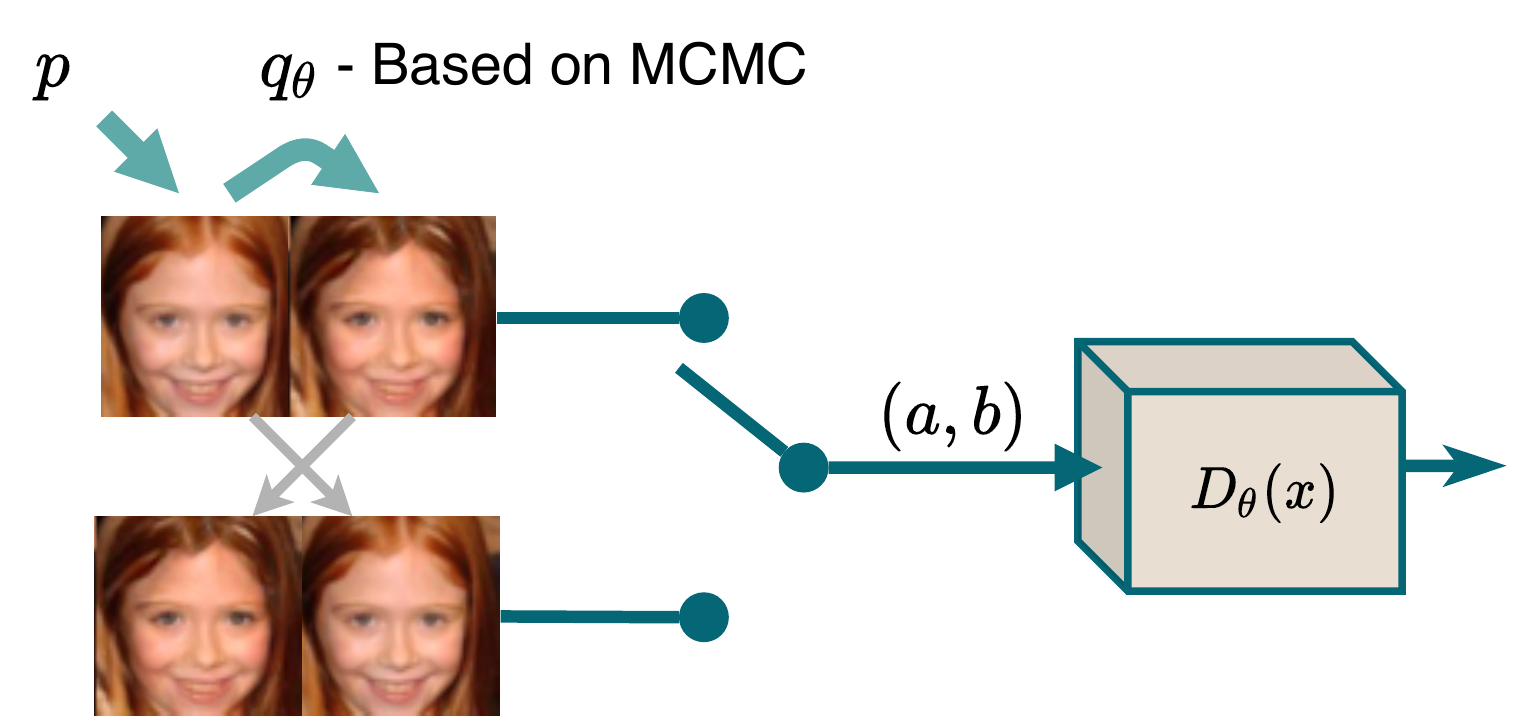}
        \caption{CD-$1$}\label{fig:cnce_mcmc}
    \end{subfigure}
    \caption{\textbf{From CNCE to CD-$1$.} 
        (a) In CNCE, each contrastive sample is generated using a fixed conditional distribution $q(\cdot|\cdot)$ (which usually corresponds to additive noise). The real and fake samples are then concatenated and presented to a discriminator in a random order, which is trained to predict the correct order. (b) CD-$1$ can be viewed as CNCE with a $q(\cdot|\cdot)$ that corresponds to the transition rule of a Markov chain with stationary distribution~$p_\theta$. Since~$q$ depends on $p_\theta$ (hence the subscript $\theta$), during training the distribution of contrastive samples becomes more similar to that of the real samples, making the discrimination task harder.
    }
    \label{fig:cnce_CD1}
\end{figure}

CNCE improves upon NCE, as it allows working with contrastive samples whose distribution is closer to $p$. However, it does not completely eliminate the problem, especially when $p$ exhibits different scales of variation in different directions. This is the case, for example, with natural images, which are known to lie close to a low-dimensional manifold. Indeed if the conditional distribution $q(\cdot|\cdot)$ is chosen to have a small variance, then CNCE fails to capture the global structure of $p$. And if $q(\cdot|\cdot)$ is taken to have a large variance, then CNCE fails to capture the intricate features of $p$ (see Fig.~\ref{fig:toy_experiment}). The latter case can be easily understood in the context of images (see Fig.~\ref{fig:cnce}). Here, the discriminator can easily distinguish which of its pair of input images is the noisy one, without having learned an accurate model for the distribution of natural images (\eg simply by comparing their smoothness). When this point is reached, the optimization essentially stops. 

In the next section we show that CD is in fact an adaptive version of CNCE, in which the contrastive distribution is constantly updated in order to keep the discrimination task hard. This explains why CD is less prone to early saturation than NCE and CNCE.

\begin{figure}[tb!]
    \centering
    \begin{subfigure}[t]{0.19\textwidth}
        \centering
        \includegraphics[scale=0.5, trim=0cm -2.8inch 0cm 0cm]{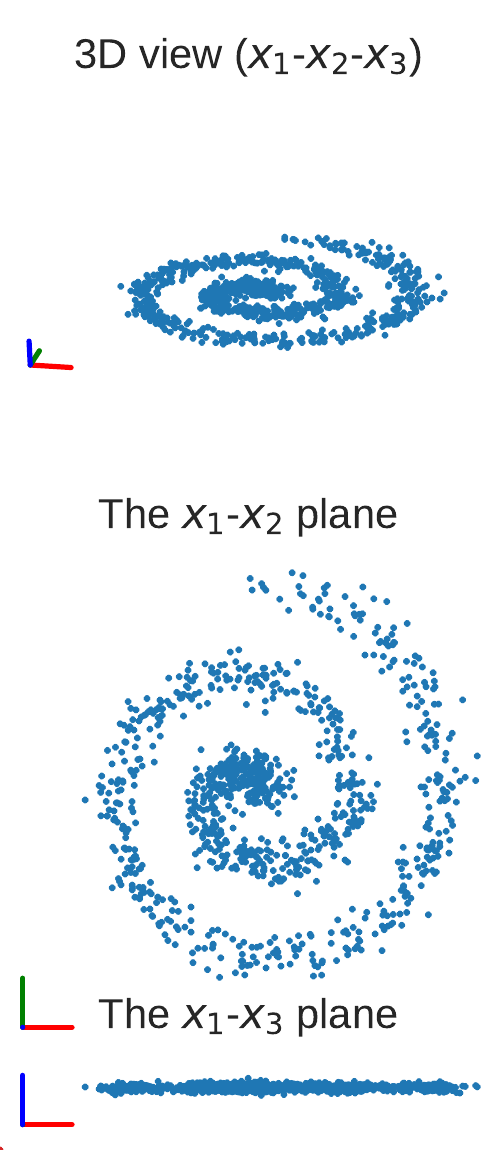}
        \caption{The toy model}
        \label{fig:toy_model}
    \end{subfigure}
    \begin{subfigure}[t]{0.76\textwidth}
        \centering
        \includegraphics[scale=0.5]{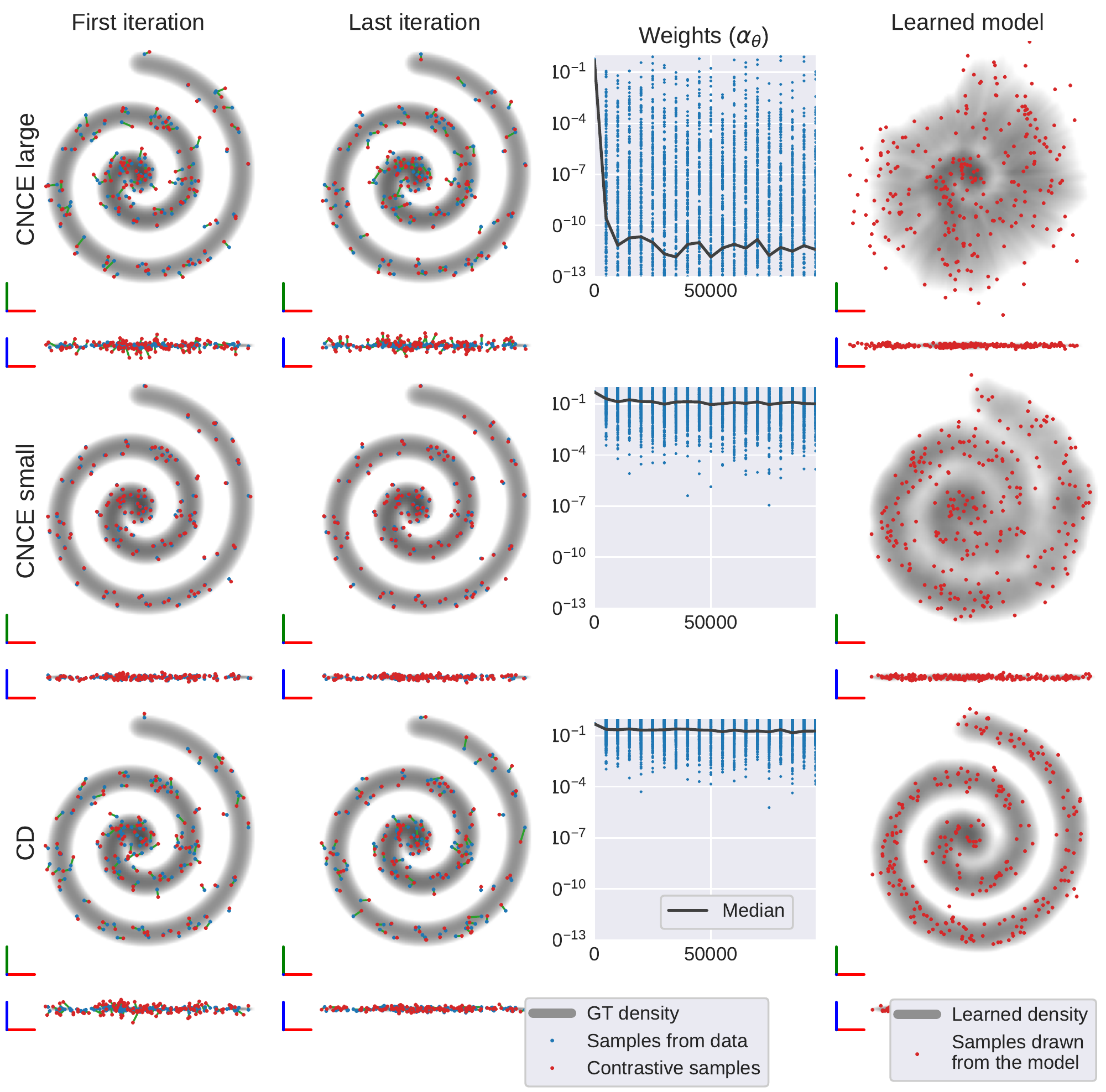}
        \caption{Comparing CNCE with CD}
        \label{fig:cnce_vs_acd}
    \end{subfigure}
    \caption{
        A toy example illustrating the importance of the adversarial nature of CD. Here, the data lies close to a 2D spiral embedded in a 10-dimensional space. (a)~The training samples in the first~3 dimensions. (b)~Three different approaches for learning the distribution: CNCE with large contrastive variance (top), CNCE with small contrastive variance (middle), and CD based on Langevin dynamics MCMC with the weight adjustment described in Sec.~\ref{sec:ACD} (bottom). As can be seen in the first two columns, CD adapts the contrastive samples according to the data distribution, whereas CNCE does not. Therefore, CNCE with large variance fails to learn the distribution because the vast majority of its contrastive samples are far from the manifold and quickly become irrelevant (as indicated by the weights $\alpha_\theta$ in the third column). And CNCE with small variance fails to learn the global structure of the distribution because its contrastive samples are extremely close to the dataset samples. CD, on the other hand, adjusts the contrastive distribution during training, so as to generate samples that are close to the manifold yet traverse large distances along it.
    }\label{fig:toy_experiment}
\end{figure}


\section{An Alternative Derivation of CD}

We now present our alternative derivation of CD. In Sec.~\ref{sec:decom} we identify a decomposition of the CNCE loss, which reveals the term that is responsible for early saturation. In Sec.~\ref{sec:CD1}, we then present a method for adapting the contrastive distribution in a way that provably keeps this term bounded away from zero. Surprisingly, the resulting update step turns out to precisely match that of CD-$1$, thus providing a new perspective on CD learning. In Sec.~\ref{sec:CDk}, we extend our derivation to include CD-$k$ (with $k\geq1$).

\subsection{Reinterpreting CNCE}
\label{sec:decom}

Let us denote 
\begin{equation}\label{eq:W1}
w_{\theta}(a,b)\triangleq\frac{q(a|b)p_{\theta}(b)}{q(b|a)p_{\theta}(a)},
\end{equation}
so that we can write CNCE's discriminator (\ref{eq:cnce_discriminator}) as
\begin{equation}\label{eq:CNCEdiscrimnator}
D_{\theta}(a,b)=\left(1+w_{\theta}(a,b)\right)^{-1}.
\end{equation}
Then we have the following observation (see proof in App.~\ref{appendix:cnce_grad}).

\begin{observation}
\label{obs:CNCEgrad}
The gradient of the CNCE loss can be expressed as
\begin{align}
\label{eq:cnce_grad}
\nabla_{\theta}\ell_{\text{CNCE}}=\mathbb{E}\!_{\substack{X\sim p \\ \tilde{X}|X\sim q}}\!\left[\alpha_\theta(X,\tilde{X})\left(\nabla_{\theta}\log p_{\theta}(\tilde{X})-\nabla_{\theta}\log p_{\theta}(X)\right)\right],
\end{align}
where
\begin{equation}\label{eq:weight}
\alpha_\theta(x,\tilde{x})\triangleq (1+w_\theta(x,\tilde{x})^{-1})^{-1}.
\end{equation}
\end{observation}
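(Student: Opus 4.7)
The plan is to compute $\nabla_\theta \ell_{\text{CNCE}}$ directly from the definition of the BCE loss applied to the discriminator in~(\ref{eq:cnce_discriminator}), exploiting the symmetry $w_\theta(b,a) = 1/w_\theta(a,b)$ to collapse the two BCE terms into one.

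First, I would write the CNCE loss explicitly. With $Y=1$ corresponding to the ordered pair $(A,B) = (X,\tilde{X})$ and $Y=0$ to $(A,B) = (\tilde{X},X)$, each occurring with probability $1/2$, the BCE loss is
\begin{equation*}
\ell_{\text{CNCE}} = -\tfrac12\,\mathbb{E}_{X,\tilde{X}}\bigl[\log D_\theta(X,\tilde{X})\bigr] - \tfrac12\,\mathbb{E}_{X,\tilde{X}}\bigl[\log(1 - D_\theta(\tilde{X},X))\bigr],
\end{equation*}
with $X \sim p$, $\tilde{X} \mid X \sim q$. The key observation is that because $w_\theta(b,a) = 1/w_\theta(a,b)$ directly from~(\ref{eq:W1}), the discriminator~(\ref{eq:CNCEdiscrimnator}) satisfies $D_\theta(b,a) = w_\theta(a,b)/(1+w_\theta(a,b)) = 1 - D_\theta(a,b)$. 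Applying this identity to the second expectation above collapses both terms into a single one:
\begin{equation*}
\ell_{\text{CNCE}} = -\mathbb{E}_{X,\tilde{X}}[\log D_\theta(X,\tilde{X})] = \mathbb{E}_{X,\tilde{X}}\bigl[\log\bigl(1 + w_\theta(X,\tilde{X})\bigr)\bigr].
\end{equation*}

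Next, I would differentiate under the expectation (valid since $p$ and $q$ do not depend on $\theta$). The chain rule gives
\begin{equation*}
\nabla_\theta \log\bigl(1+w_\theta(X,\tilde{X})\bigr) = \frac{w_\theta(X,\tilde{X})}{1+w_\theta(X,\tilde{X})}\,\nabla_\theta \log w_\theta(X,\tilde{X}).
\end{equation*}
From~(\ref{eq:W1}), $\log w_\theta(a,b) = \log q(a|b) - \log q(b|a) + \log p_\theta(b) - \log p_\theta(a)$, and the $q$-terms have no $\theta$-dependence, so $\nabla_\theta \log w_\theta(X,\tilde{X}) = \nabla_\theta \log p_\theta(\tilde{X}) - \nabla_\theta \log p_\theta(X)$. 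Finally, rewriting the scalar prefactor as $w_\theta/(1+w_\theta) = (1 + w_\theta^{-1})^{-1} = \alpha_\theta(X,\tilde{X})$ by~(\ref{eq:weight}) yields~(\ref{eq:cnce_grad}).

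There is no real obstacle here: the proof reduces to two routine steps (the symmetry collapsing, and a chain-rule computation), and the only point worth highlighting is the symmetry $D_\theta(b,a) = 1 - D_\theta(a,b)$, which both simplifies the loss and exposes the weighting function $\alpha_\theta$ that drives the rest of the paper's analysis.
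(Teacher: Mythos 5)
Your proof is correct and follows essentially the same route as the paper's own derivation in Appendix~\ref{appendix:cnce_grad}: collapse the two BCE terms via the symmetry $1-D_\theta(a,b)=D_\theta(b,a)$, reduce the loss to $\mathbb{E}[\log(1+w_\theta)]$, and apply the chain rule together with $\nabla_\theta w_\theta = w_\theta\nabla_\theta\log w_\theta$ to expose the weight $\alpha_\theta=(1+w_\theta^{-1})^{-1}$. No gaps; nothing further to add.
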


Note that (\ref{eq:cnce_grad}) is similar in nature to the (approximate) gradient of the CD loss~(\ref{eq:cd_grad_approx}). Particularly, as in CD, the term $\nabla_{\theta}\log p_{\theta}(\tilde{X})-\nabla_{\theta}\log p_{\theta}(X)$ causes each gradient step to increase  the log-likelihood of samples from the dataset on the expense of the log-likelihood of the contrastive samples. However, as opposed to CD, here we also have the coefficient $\alpha_{\theta}(x,\tilde{x})$, which assigns a weight between $0$ and $1$ to each pair of samples $(x,\tilde{x})$. To understand its effect, observe that
\begin{equation}
\alpha_\theta(x,\tilde{x})=1-D_\theta(x,\tilde{x})=D_{\theta}(\tilde{x},x).
\end{equation}
Namely, this coefficient is precisely the probability that the discriminator assigns to the incorrect order of the pair. Therefore, this term gives a low weight to ``easy'' pairs (\ie for which $D_\theta(x,\tilde{x})$ is close to $1$) and a high weight to ``hard'' ones.

This weighting coefficient is of course essential for ensuring convergence to $p$. For example, it prevents  $\log p_{\theta}$ from diverging to $\pm\infty$ when the discriminator is presented with the same samples over and over again. The problem is that a discriminator can often correctly discriminate all training pairs, even with a $p_\theta$ that is still far from $p$. In such cases, $\alpha_\theta$ becomes practically zero for all pairs and the model stops updating. This shows that a good contrastive distribution is one which keeps the discrimination task hard throughout the training. As we show next, there is a particular choice which provably prevents $\alpha_\theta$ from converging to zero, and that choice results in the CD method.


\subsection{From CNCE to CD-\(1\)}
\label{sec:CD1}

To bound $\alpha_\theta$ away from $0$, and thus avoid the early stopping of the training process, we now extend the original CNCE algorithm by allowing the conditional distribution $q$ to depend on $p_\theta$ (and thus to change from one step to the next). Our next key observation is that in this setting there exists a particular choice that keeps $\alpha_\theta$ constant.

\begin{observation}
If $q$ is chosen to be the transition probability of a reversible Markov chain with stationary distribution $p_\theta$, then
\begin{equation}
\alpha_{\theta}(x,\tilde{x})=\frac{1}{2}, \quad\forall x,\tilde{x}.
\end{equation}
\end{observation}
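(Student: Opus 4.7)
The plan is to unfold the definition of $\alpha_\theta$ in terms of $w_\theta$, substitute the definition of $w_\theta$ from equation (\ref{eq:W1}), and then apply the detailed balance condition that characterizes a reversible Markov chain with stationary distribution $p_\theta$. The argument should be essentially a single algebraic identity with no real content beyond unwinding definitions — the entire force of the statement is packed into the word ``reversible.''

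Concretely, I would first recall that a Markov chain with transition kernel $q$ is reversible with respect to $p_\theta$ precisely when detailed balance holds:
\begin{equation*}
q(b\mid a)\, p_\theta(a) \;=\; q(a\mid b)\, p_\theta(b) \qquad \forall a, b.
\end{equation*}
Substituting this into the definition (\ref{eq:W1}) of $w_\theta$, the numerator and denominator coincide, giving $w_\theta(a,b) = 1$ for all $a,b$. Plugging into equation (\ref{eq:weight}) then yields $\alpha_\theta(x,\tilde{x}) = (1 + 1^{-1})^{-1} = \tfrac{1}{2}$.

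There is no main obstacle here; the only subtlety to flag is an interpretive one. Because $w_\theta \equiv 1$, we also have $D_\theta(a,b) \equiv \tfrac{1}{2}$, so the discriminator is maximally confused for \emph{every} pair. This is exactly the statement that a forward transition of a reversible chain in stationarity is statistically indistinguishable from a time-reversed transition, which foreshadows the ``time-reversal adversarial game'' interpretation promised in the title. I would include a one-sentence remark noting that the observation does \emph{not} require $p_\theta = p$: even while the model is still being trained, using a $p_\theta$-reversible kernel $q_\theta$ yields $\alpha_{\theta} = \tfrac{1}{2}$, which is precisely what keeps the gradient in (\ref{eq:cnce_grad}) from vanishing and recovers the unweighted CD-$1$ update (\ref{eq:cd_grad_approx}) up to a factor of $\tfrac{1}{2}$.
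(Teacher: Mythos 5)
Your proof is correct and follows exactly the same route as the paper's: invoke detailed balance as the definition of reversibility, substitute into (\ref{eq:W1}) to get $w_\theta\equiv 1$, and conclude $\alpha_\theta=\tfrac{1}{2}$ from (\ref{eq:weight}). The interpretive remarks you add are accurate but not part of the paper's (equally short) argument.
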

\begin{proof}
A reversible chain with transition $q$ and stationary distribution $p_\theta$, satisfies the detailed balance property
\begin{equation}\label{eq:detailed_balance}
    q(\tilde{x}|x)p_{\theta}(x)=q(x|\tilde{x})p_{\theta}(\tilde{x}),\quad \forall x,\tilde{x}.
\end{equation}
Substituting (\ref{eq:detailed_balance}) into (\ref{eq:W1}) leads to $w_{\theta}(x,\tilde{x})=1$, which from (\ref{eq:weight}) implies $\alpha_\theta(x,\tilde{x})=\tfrac{1}{2}$.
\end{proof}

This observation directly links CNCE to CD. First, the suggested method for generating the contrastive samples is precisely the one used in CD-$1$. Second, as this choice of $q$ leads to $\alpha_\theta(x,\tilde{x})=\tfrac{1}{2}$, it causes the gradient of the CNCE loss (\ref{eq:cnce_grad}) to become
\begin{equation}
\label{eq:cnce_mcmc_grad_exact}
\nabla_{\theta}\ell_{\text{CNCE}}=\tfrac{1}{2}\mathbb{E}_{\substack{X\sim p \\ \tilde{X}|X\sim q}}
\left[\nabla_{\theta}\log p_{\theta}(\tilde{X})-\nabla_{\theta}\log p_{\theta}(X)\right],
\end{equation}
which is exactly proportional to the CD-$1$ update (\ref{eq:cd_grad_approx}). We have thus obtained an alternative derivation of CD-$1$. Namely, rather than viewing CD-$1$ learning as an \emph{approximate} gradient descent process for the loss (\ref{eq:cd_loss}), we can view each step as the \emph{exact} gradient of the CNCE discrimination loss, where the reference distribution $q$ is adapted to the current learned model $p_\theta$. This is illustrated in Fig.~\ref{fig:cnce_mcmc}.

Since $q$ is chosen based on $p_\theta$, the overall process is in fact an adversarial game. Namely, the optimization alternates between updating $q$, which acts as a generator, and updating $p_{\theta}$, which defines the discriminator. As $p_{\theta}$ approaches $p$, the distribution of samples generated from the MCMC also becomes closer to $p$, which makes the discriminator's task harder and thus prevents early saturation.

It should be noted that formally, since $q$ depends on $p_\theta$, it also indirectly depends on $\theta$, so that a more appropriate notation would be $q_\theta$. However, during the update of $p_{\theta}$ we fix $q_{\theta}$ (and vise versa), so that the gradient in the discriminator update does not consider the dependence of $q_\theta$ on $\theta$. This is why (\ref{eq:cnce_mcmc_grad_exact}) does not involve the gradient of $\tilde{X}$ which depends on $q_{\theta}$. 

The reason for fixing $q_{\theta}$ comes from the adversarial nature of the learning process. Being part of the chain generation process, the goal of the transition rule $q_{\theta}$ is to generate chains that appear to be time-reversible, while the goal of the classifier, which is based on the model $p_{\theta}$, is to correctly classify whether the chains were reversed. Therefore, we do not want the optimization of the classifier to affect~$q_{\theta}$. This is just like in GANs, where the generator and discriminator have different objectives, and so when updating the discriminator the generator is kept fixed.

\subsection{From CD-\(1\) to CD-\(k\)}
\label{sec:CDk}

To extend our derivation to CD-$k$ with an arbitrary $k\geq1$, let us now view the discrimination problem of the previous section as a special case of a more general setting. Specifically, the pairs of samples presented to the discriminator in Sec.~\ref{sec:CD1}, can be viewed as Markov chains of length two (comprising the initial sample from the dataset and one extra generated sample). It is therefore natural to consider also Markov chains of arbitrary lengths. That is, assume we initialize the MCMC at a sample $x_i$ from the dataset and run it for $k$ steps to obtain a sequence $(x^{(0)},x^{(1)},\ldots,x^{(k)})$, where  $x^{(0)}=x_i$. We can then present this sequence to a discriminator either in its original order, or time-reversed, and train the discriminator to classify the correct order. We coin this a \emph{time-reversal classification task}. Interestingly, in this setting, we have the following.

\begin{observation}\label{obv:cdk}
When using a reversible Markov chain of length $k+1$ with stationary distribution $p_\theta$, the gradient of the BCE loss of the time-reversal classification task is given by
\begin{equation}\label{eq:CDkgrad}
\nabla_{\theta}\ell_{\text{CNCE}}=\tfrac{1}{2}\mathbb{E}\left[\nabla_{\theta}\log p_{\theta}(X^{(k)})-\nabla_{\theta}\log p_{\theta}(X^{(0)})\right],
\end{equation}
which is exactly identical to the CD-$k$ update (\ref{eq:cd_grad_approx}) up to a multiplicative factor of $\tfrac{1}{2}$.
\end{observation}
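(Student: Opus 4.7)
The plan is to mimic the two-sample CNCE analysis (Observation~\ref{obs:CNCEgrad} together with the reversibility argument of the previous section) for chains of length $k+1$. First, I would formalize the time-reversal classification task: a forward chain $\mathbf{x}=(x^{(0)},\ldots,x^{(k)})$ is sampled by drawing $x^{(0)}\sim p$ and then applying the transition $q_\theta$ repeatedly; with probability $1/2$ (the label $Y$) the chain is presented to the discriminator either in its original or its reversed order. A standard Bayes-rule calculation yields the optimal discriminator, and, as in~(\ref{eq:cnce_discriminator}), substituting $p\mapsto p_\theta$ gives the parametric discriminator
\begin{equation*}
D_\theta(\mathbf{c}) = \frac{p_\theta(c_0)\prod_{j=1}^{k}q_\theta(c_j|c_{j-1})}{p_\theta(c_0)\prod_{j=1}^{k}q_\theta(c_j|c_{j-1}) + p_\theta(c_k)\prod_{j=1}^{k}q_\theta(c_{j-1}|c_j)} = \bigl(1+w_\theta(\mathbf{c})\bigr)^{-1},
\end{equation*}
where $w_\theta$ is the natural chain-length generalization of~(\ref{eq:W1}).

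Next, I would repeat the derivation in App.~\ref{appendix:cnce_grad} essentially verbatim, because the manipulations that take the BCE loss to the form
\begin{equation*}
\nabla_\theta \ell_{\text{CNCE}} = \mathbb{E}\!\left[\alpha_\theta(\mathbf{c})\, \nabla_\theta \log w_\theta(\mathbf{c})\right]
\end{equation*}
use only $D_\theta=(1+w_\theta)^{-1}$ and the symmetry between the two labels; nothing about them is special to $k=1$. The only thing that changes is the content of $\log w_\theta$. Keeping $q_\theta$ fixed during the discriminator update (as justified in Sec.~\ref{sec:CD1}), the $\theta$-gradient of $\log w_\theta$ collapses to $\nabla_\theta\log p_\theta(c_k)-\nabla_\theta\log p_\theta(c_0)$, since all $q_\theta$ factors are treated as constants and the intermediate states do not appear in $p_\theta$ at all.

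The remaining ingredient is to promote Observation~2 to chains, i.e.\ to show that $\alpha_\theta(\mathbf{c})\equiv\tfrac12$. The key step is a telescoping product: applying detailed balance~(\ref{eq:detailed_balance}) to each transition gives
\begin{equation*}
\prod_{j=1}^{k}\frac{q_\theta(c_{j-1}|c_j)}{q_\theta(c_j|c_{j-1})} = \prod_{j=1}^{k}\frac{p_\theta(c_{j-1})}{p_\theta(c_j)} = \frac{p_\theta(c_0)}{p_\theta(c_k)},
\end{equation*}
so that $w_\theta(\mathbf{c})=1$ identically and hence $\alpha_\theta(\mathbf{c})=\tfrac12$. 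Combining this with the gradient expression above yields exactly~(\ref{eq:CDkgrad}).

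The step I expect to require the most care is confirming that the intermediate samples $x^{(1)},\ldots,x^{(k-1)}$ genuinely drop out of the update, i.e.\ that it is legitimate to treat $q_\theta$ as $\theta$-independent inside the gradient. This is the adversarial modelling choice articulated in the text, but it is the point worth spelling out explicitly, because a naive chain rule would leave a bulky sum over $j$ of $\nabla_\theta\log q_\theta(c_j|c_{j-1})$ terms that do not appear in~(\ref{eq:CDkgrad}). Once this convention is granted, the telescoping collapse of $w_\theta$ to $1$ makes the rest of the proof essentially mechanical.
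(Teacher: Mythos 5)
Your proposal is correct and follows essentially the same route as the paper: compute the Bayes-optimal discriminator for the time-reversal task, substitute $p_\theta$ for $p$, repeat the Appendix~\ref{appendix:cnce_grad} manipulations to get $\mathbb{E}[\alpha_\theta\,\nabla_\theta\log w_\theta]$ with $q$ held fixed, and invoke detailed balance to force $\alpha_\theta\equiv\tfrac12$. The only cosmetic difference is that the paper concludes $w_\theta=1$ by noting each per-step factor $w_\theta(a_{i-1},a_i)$ equals $1$ individually, whereas you telescope the product first; both are the same calculation.
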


This constitutes an alternative interpretation of CD-$k$. That is, CD-$k$ can be viewed as a time-reversal adversarial game, where in each step, the model $p_\theta$ is updated so as to allow the discriminator to better distinguish MCMC chains from their time-reversed counterparts.

Two remarks are in order. First, it is interesting to note that although the discriminator's task is to classify the order of the whole chain, its optimal strategy is to examine only the endpoints of the chain, $x^{(0)}$ and $x^{(k)}$. Second, it is insightful to recall that the original motivation behind the CD-$k$ loss (\ref{eq:cd_loss}) was that when $p_\theta$ equals~$p$, the marginal probability of each individual step in the chain is also $p$. Our derivation, however, requires more than that. To make the chain indistinguishable from its time-reversed version, the joint probability of all samples in the chain must be invariant to a flip of the order. When $p_\theta=p$, this is indeed the case, due to the detailed balance property (\ref{eq:detailed_balance}).

\begin{proof}[Proof of Observation~\ref{obv:cdk}]
We provide the outline of the proof (see full derivation in App. \ref{appendix:cnce_chain_grad}). Let $(A^{(0)},A^{(1)},\ldots,A^{(k)})$ denote the input to the discriminator and let $Y$ indicate the order of the chain, with $Y=1$ corresponding to $(A^{(0)},A^{(1)},\ldots,A^{(k)})=(X^{(0)},X^{(1)},\ldots,X^{(k)})$ and $Y=0$ to $(A^{(0)},A^{(1)},\ldots,A^{(k)})=(X^{(k)},X^{(k-1)},\ldots,X^{(0)})$. The discriminator that minimizes the BCE loss is now given by
\begin{align}\label{eq:discriminator_long_chain}
D(a_0,a_1,\ldots,a_k)&=\mathbb{P}(Y=1|A^{(0)}=a_0,A^{(1)}=a_1,\ldots,A^{(k)}=a_k) \nonumber\\
&=\left(1+\frac
{q(a_0|a_1)\cdots q(a_{k-1}|a_k)p(a_k)}
{q(a_k|a_{k-1})\cdots q(a_1|a_0)p(a_0)}
\right)^{-1} \nonumber\\
&=\left(1+\prod_{i=1}^k w_{\theta}(a_{i-1},a_i)\right)^{-1}.
\end{align}
The CNCE paradigm thus defines a discriminator $D_\theta$ having the form of (\ref{eq:discriminator_long_chain}) but with $p$ replaced by~$p_\theta$. Recall that despite the dependence of the transition probability~$q$ on the current learned model~$p_\theta$, it is regarded as fixed within each discriminator update step. We therefore omit the subscript $\theta$ from $q$ here. Similarly to the derivation of (\ref{eq:cnce_grad}), explicitly writing the gradient of the BCE loss of our discriminatrion task, gives
\begin{align}
\label{eq:cnce_chain_grad}
\nabla_{\theta}\ell_{\text{chain}}
&=\mathbb{E}
\left[\left(1+\prod_{i=1}^k w_{\theta}(X^{(i-1)},X^{(i)})^{-1}\right)^{-1}\left(\nabla_{\theta}\log p_{\theta}(X^{(k)})-\nabla_{\theta}\log p_{\theta}(X^{(0)})\right)
\right]\nonumber \\
&=\mathbb{E}
\left[\alpha_\theta(X^{(0)},\ldots,X^{(k)})\left(\nabla_{\theta}\log p_{\theta}(X^{(k)})-\nabla_{\theta}\log p_{\theta}(X^{(0)})\right)
\right].
\end{align}
where we now defined
\begin{equation}\label{eq:weight_chain}
\alpha_\theta(a_0,\ldots,a_k)\triangleq \left(1+\prod_{i=1}^k w_{\theta}(a_{i-1},a_i)^{-1}\right)^{-1}.
\end{equation}
Note that (\ref{eq:cnce_grad}) is a special case of (\ref{eq:cnce_chain_grad}) corresponding to $k=1$, where $X$ and $\tilde{X}$ in (\ref{eq:cnce_grad}) are $X^{(0)}$ and $X^{(1)}$ in (\ref{eq:cnce_chain_grad}). As before, when $q$ satisfies the detailed balance property (\ref{eq:detailed_balance}), we obtain $w_{\theta}=1$ and consequently the weighting term $\alpha_\theta$ again equals $\tfrac{1}{2}$. Thus, the gradient (\ref{eq:cnce_chain_grad}) reduces to (\ref{eq:CDkgrad}), which is exactly proportional to the CD-$k$ update (\ref{eq:cd_grad_approx}).
\end{proof}


\subsection{MCMC Processes That do not Have Detailed Balance}
\label{sec:ACD}

In our derivation, we assumed that the MCMC process is reversible, and thus exactly satisfies the detailed balance property (\ref{eq:detailed_balance}). This assumption ensured that $w_{\theta}=1$ and thus $\alpha_\theta=\tfrac{1}{2}$. In practice, however, commonly used MCMC methods satisfy this property only approximately. For example, the popular discrete Langevin dynamics process obeys detailed balance only in the limit where the step size approaches zero. The common approach to overcome this is through Metropolis-Hastings (MH) rejection \citep{hastings1970monte}, which guarantees detailed balance by accepting only a portion of the proposed MCMC transitions. In this approach, the probability of accepting a transition from $x$ to $\tilde{x}$ is closely related to the weighing term $w_{\theta}$, and is given by
\begin{equation}
A(x,\tilde{x})=\min\left(1,w_{\theta}(x,\tilde{x})\right).
\end{equation}
Interestingly, our derivation reveals an alternative method for accounting for lack of detailed balance.

Concretely, we saw that the general expression  for the gradient of the BCE loss (before assuming detailed balance) is given by (\ref{eq:cnce_chain_grad}). This expression differs from the original update step of CD-$k$ only in the weighting term $\alpha_\theta(x^{(0)},\ldots,x^{(k)})$. Therefore, all that is required for maintaining correctness in the absence of detailed balance, is to weigh each chain by its ``hardness''  $\alpha_\theta(x^{(0)},\ldots,x^{(k)})$ (see Alg.~\ref{algo:acd} in App.~\ref{appendix:algo}). Note that in this case, the update depends not only on the end-points of the chains, but rather also on their intermediate steps. As can be seen in Fig. \ref{fig:cd_flavors}, this method performs just as well as MH, and significantly better than vanilla CD without correction.

\begin{figure}[t]
    \centering
    \includegraphics[scale=0.5]{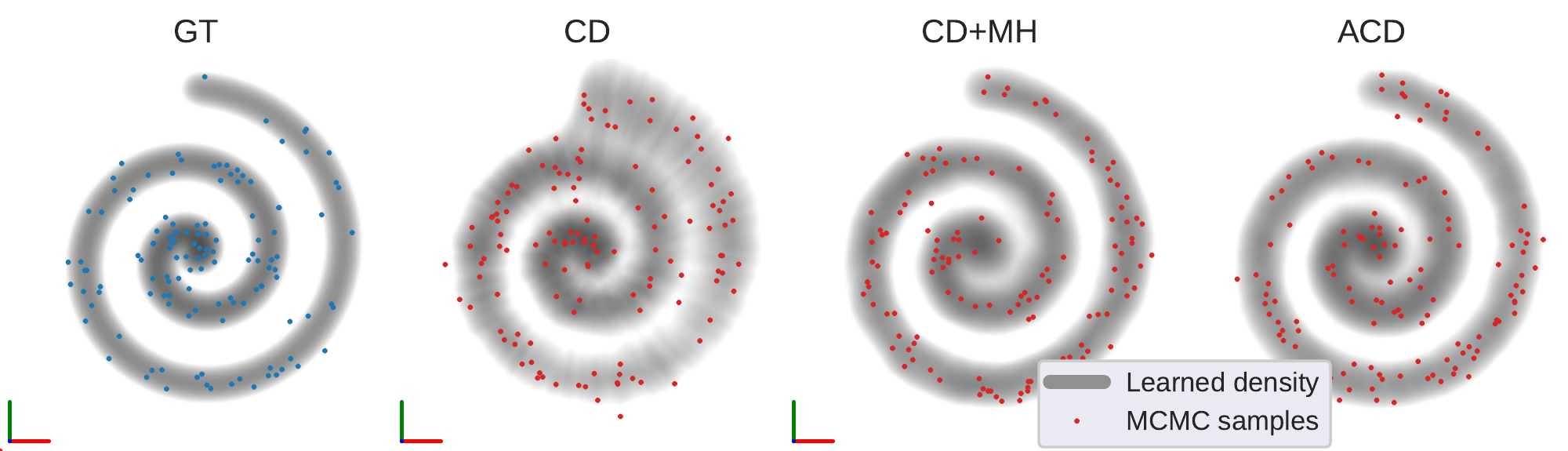}
    \caption{
        Here, we use different CD configurations for learning the model of Fig.~\ref{fig:toy_experiment}. All configurations use Langevin dynamics as their MCMC process, but with different ways of compensating for the lack of detailed balance. From left to right we have the ground-truth density, CD w/o any correction, CD with Metropolis-Hastings rejection, and CD with our proposed adjustment.
        }
    \label{fig:cd_flavors}
\end{figure}


\section{Illustration Through a Toy Example}
To illustrate our observations, we now conclude with a simple toy example (see Fig.~\ref{fig:toy_experiment}). Our goal here is not to draw general conclusions regarding the performance of CNCE and CD, but rather merely to highlight the adversarial nature of CD and its importance when the data density exhibits different scales of variation along different directions.

We take data concentrated around a 2-dimensional manifold embedded in 10-dimensional space. Specifically, let $e^{(1)},\dots,e^{(10)}$ denote the standard basis in $\mathbb{R}^{10}$. Then each data sample is generated by adding Gaussian noise to a random point along a 2D spiral lying in the $e^{(1)}$-$e^{(2)}$ plane. The STD of the noise in the $e^{(1)}$ and $e^{(2)}$ directions is 5 times larger than that in the other 8 axes. Figure \ref{fig:toy_model} shows the projections of the the data samples onto the the first 3 dimensions. Here, we use a multi-layer perceptron (MLP) as our parametric model, $\log p_\theta$, and train it using several different learning configurations (for the full details see App.~\ref{appendix:experiment_details}).

Figure~\ref{fig:cnce_vs_acd} visualizes the training as well as the final result achieved by each configuration. The first two rows show CNCE with Gaussian contrastive distributions of two different STDs. The third row shows the adjusted CD described in Sec.~\ref{sec:ACD} with Langevin Dynamics as its MCMC process. As can be seen, for CNCE with a large STD, the contrastive samples are able to explore large areas around the original samples, but this causes their majority to lie relatively far from the manifold (see their projections onto the $e^{(1)}$-$e^{(3)}$ plane). In this case, $\alpha_\theta$ decreases quickly, causing the learning process to ignore most samples at a very early stage of the training. When using CNCE with a small STD, the samples remain relevant throughout the training, but this comes at the price of inability to capture the global structure of the distribution. CD, on the other hand, is able to enjoy the best of both worlds as it adapts the contrastive distribution over time. Indeed, as the learning progresses, the contrastive samples move closer to the manifold to maintain their relevance. Note that since we use the adjusted version of CD, the weights in this configuration are not precisely $1$. We chose the step size of the Langevin Dynamics so that the median of the weights is approximately $10^{-2}$.

Figure~\ref{fig:cd_flavors} shows the results achieved by different variants of CD. As can be seen, without correcting for the lack of detailed balance, CD fails to estimate the density correctly. When using MH rejection to correct the MCMC, or our adaptive CD (ADC) to correct the update steps, the estimate is significantly improved.

\section{Conclusion}
The classical CD method has seen many uses and theoretical analyses over the years. The original derivation presented the algorithm as an approximate gradient descent process for a certain loss. However, the accuracy of the approximation has been a matter of much dispute, leaving it unclear what objective the algorithm minimizes in practice. Here, we presented an alternative derivation of CD's update steps, which involves no approximations. Our analysis shows that CD is in essence an adversarial learning procedure, where a discriminator is trained to distinguish whether a Markov chain generated from the learned model has been time-flipped or not. Therefore, although predating GANs by more than a decade, CD in fact belongs to the same family of techniques. This provides a possible explanation for its empirical success.

\paragraph{Acknowledgement} This research was supported by the Technion
Ollendorff Minerva Center. 


\bibliography{bib}
\bibliographystyle{iclr2021_conference}


\newpage
\appendix


\section{Algorithms}
\label{appendix:algo}

Below we summarize the algorithms of the classical CD and the proposed adjusted version described in Sec.~\ref{sec:ACD}.

\begin{algorithm}[h]
    \caption{Contrastive Divergence - $k$}\label{algo:cd}
    \textbf{Require:} parametric model $p_{\theta}$, MCMC transition rule $q_{\theta}(\cdot|\cdot)$ with stationary distribution~$p_{\theta}$, step size $\eta$, chain length $k$.

    \While{not converged}{
        Sample a batch $\{x_i\}_{i=1}^n$ from the dataset\\
        Initialize $\{\tilde{x}_i\}_{i=1}^n$ to be a copy of the batch\\
        \For{i=$1$ \KwTo $n$}{
            \For{j=$1$ \KwTo $k$}{
                Draw a sample $x'$ from $q_{\theta}(\cdot|\tilde{x}_i)$\\
                $\tilde{x}_i\leftarrow x'$
            }
            $g_i\leftarrow
            \nabla_{\theta}\log p_{\theta}(\tilde{x}_i)
            -\nabla_{\theta}\log p_{\theta}(x_i)$
        }
    $\theta\leftarrow
    \theta-\eta\frac{1}{n}\sum_i g_i$
  }
\end{algorithm}

\begin{algorithm}[h]
    \caption{Adjusted Contrastive Divergence - $k$}\label{algo:acd}
    \textbf{Require:} parametric model $p_{\theta}$, MCMC transition rule $q_{\theta}(\cdot|\cdot)$ whose stationary distribution is $p_{\theta}$, step size $\eta$, chain length $k$.

    \While{not converged}{
        Sample a batch $\{x_i\}_{i=1}^n$ from the dataset\\
        Initialize $\{\tilde{x}_i\}_{i=1}^n$ to be a copy of the batch\\
        \For{i=$1$ \KwTo $n$}{
            $w^{\text{tot}}_i\leftarrow 1$\\
            \For{j=$1$ \KwTo $k$}{
                Draw a sample $x'$ from $q_{\theta}(\cdot|\tilde{x}_i)$\\
                $w_i^{\text{tot}}\leftarrow w_i^{\text{tot}}\cdot\frac{q(x_i|x')p_{\theta}(x')}{q(x'|x_i)p_{\theta}(x_i)}$\\
                $\tilde{x}_i\leftarrow x'$
            }
            $\alpha_i\leftarrow(1+1/w_i^{\text{tot}})^{-1}$\\
            $g_i\leftarrow
            \nabla_{\theta}\log p_{\theta}(\tilde{x}_i)
            -\nabla_{\theta}\log p_{\theta}(x_i)$
        }
    $\theta\leftarrow
    \theta-\eta\frac{1}{n}\sum_i\alpha_i\cdot g_i$
  }
\end{algorithm}


\section{Derivation of CNCE's Gradient}
\label{appendix:cnce_grad}

\begin{proof}[Proof of Observation \ref{obs:CNCEgrad}]
The BCE loss achieved by the CNCE discriminator (\ref{eq:cnce_discriminator}) is given by
\begin{align}
\label{eq:cnce_loss}
\ell_{\text{CNCE}}&=-\tfrac{1}{2}\mathbb{E}_{\substack{A\sim p \\ B|A\sim q}}\left[
\log(D_{\theta}(A,B))\right]-\tfrac{1}{2}\mathbb{E}_{\substack{B\sim p \\ A|B\sim q}}\left[\log(1-D_{\theta}(A,B))
\right]=\nonumber\\
&=-\mathbb{E}_{\substack{X\sim p \\ \tilde{X}|X\sim q}}\left[\log( D_\theta(X,\tilde{X}))\right],
\end{align}
where we used the fact that $1-D_\theta(a,b)=D_\theta(b,a)$. Now, substituting the definition of $D_\theta$ form (\ref{eq:CNCEdiscrimnator}), the gradient of (\ref{eq:cnce_loss}) can be expressed as
\begin{align}\label{eq:cnce_loss_grad1}
\nabla_{\theta}\ell_{\text{CNCE}}
&=\mathbb{E}_{\substack{X\sim p \\ \tilde{X}|X\sim q}}\left[\nabla_{\theta}\log\left(1+w_\theta(X,\tilde{X})\right)\right] \nonumber\\
&=\mathbb{E}_{\substack{X\sim p \\ \tilde{X}|X\sim q}}\left[\left(1+w_\theta(X,\tilde{X})\right)^{-1}\nabla_{\theta}w_\theta(X,\tilde{X})\right] \nonumber\\
&=\mathbb{E}_{\substack{X\sim p \\ \tilde{X}|X\sim q}}\left[
  \left(
    1+w_\theta(X,\tilde{X})
  \right)^{-1}
  \frac{w_\theta(X,\tilde{X})}{w_\theta(X,\tilde{X})}\nabla_{\theta}w_\theta(X,\tilde{X})
\right] \nonumber\\
&=\mathbb{E}_{\substack{X\sim p \\ \tilde{X}|X\sim q}}\left[
  \left(
    \frac{1+w_\theta(X,\tilde{X})}{w_\theta(X,\tilde{X})}
  \right)^{-1}
  \frac{\nabla_{\theta}w_\theta(X,\tilde{X})}{w_\theta(X,\tilde{X})}
\right] \nonumber\\
&=\mathbb{E}_{\substack{X\sim p \\ \tilde{X}|X\sim q}}\left[\left(1+w_\theta(X,\tilde{X})^{-1}\right)^{-1}\nabla_{\theta}\log(w_\theta(X,\tilde{X}))\right]\nonumber\\
&=\mathbb{E}_{\substack{X\sim p \\ \tilde{X}|X\sim q}}\left[\alpha_\theta(X,\tilde{X})\left(\nabla_{\theta}\log p_{\theta}(\tilde{X})-\nabla_{\theta}\log p_{\theta}(X)\right)\right],
\end{align}
where we used the fact that $\nabla_{\theta}w_\theta=w_\theta\nabla_{\theta}\log(w_\theta)$ and the definition of $\alpha_\theta$ from (\ref{eq:weight}).
\end{proof}


\section{Derivation of the gradient of CNCE with multiple MC steps}
\label{appendix:cnce_chain_grad}

We here describe the full derivation of the gradient in (\ref{eq:cnce_chain_grad}) following the same steps as in (\ref{eq:cnce_loss_grad1}). The BCE loss achieved by the discriminator in (\ref{eq:discriminator_long_chain}) is given by
\begin{align}
\label{eq:chain_loss}
\ell_{\text{chain}}=-\mathbb{E}\left[\log( D_\theta(X^{(0)},X^{(1)},\ldots,X^{(k)}))\right].
\end{align}
where we again used the fact that $1-D_\theta(a_0,a_1,\ldots,a_k)=D_\theta(a_k,a_{k-1},\ldots,a_0)$. Now, substituting the definition of $D_\theta$ form (\ref{eq:discriminator_long_chain}), the gradient of (\ref{eq:chain_loss}) can be expressed as
\begin{align}\label{eq:chain_loss_grad1}
\nabla_{\theta}\ell_{\text{chain}}
&=\mathbb{E}\left[\nabla_{\theta}\log\left(1+\prod_{i=1}^k w_{\theta}(X^{(i-1)},X^{(i)})\right)\right] \nonumber\\
&=\mathbb{E}\left[\left(1+\prod_{i=1}^k w_{\theta}(X^{(i-1)},X^{(i)})\right)^{-1}\nabla_{\theta} \left(\prod_{i=1}^k w_{\theta}(X^{(i-1)},X^{(i)})\right)\right] \nonumber\\
&=\mathbb{E}\left[
  \left(
    \frac{1+\prod_{i=1}^k w_{\theta}(X^{(i-1)},X^{(i)})}{\prod_{i=1}^k w_{\theta}(X^{(i-1)},X^{(i)})}
  \right)^{-1}
  \frac{\nabla_{\theta} \left(\prod_{i=1}^k w_{\theta}(X^{(i-1)},X^{(i)})\right)}{\prod_{i=1}^k w_{\theta}(X^{(i-1)},X^{(i)})}
\right] \nonumber\\
&=\mathbb{E}\left[\left(1+\prod_{i=1}^k w_{\theta}(X^{(i-1)},X^{(i)})^{-1}\right)^{-1}\nabla_{\theta} \log\left(\prod_{i=1}^k w_{\theta}(X^{(i-1)},X^{(i)})\right)\right] \nonumber\\
&=\mathbb{E}\left[\alpha_\theta(X^{(0)},\ldots,X^{(k)})\left(\nabla_{\theta}\log p_\theta(X^{(k)})-\nabla_{\theta}\log p_\theta(X^{(0)})\right)\right],
\end{align}
where we used the definition of $\alpha_\theta$ from (\ref{eq:weight_chain}).


\section{Toy Experiment and Training Details}
\label{appendix:experiment_details}

We here describe the full details of the toy model and learning configuration, which we used to produce the results in the paper. The code for reproducing the results is available at ---- (for the blind review the code will be available in the supplementary material).

The toy model used in the paper consists of a distribution concentrated around a 2D spiral embedded in a 10 dimensional space. Denoting the 10 orthogonal axes of the standard basis in this space by $e^{(1)},\ldots,e^{(10)}$, the spiral lies in the $e^{(1)}$-$e^{(2)}$ plane and is confined to $[-1,1]$ in each of these two axes. The samples of the model are produced by selecting random points along the spiral and adding Gaussian noise to them. In order to keep the samples close to the $e^{(1)}$-$e^{(2)}$ plane we used a non-isotropic noise with an STD of $0.05$ in the $e^{(1)}$ and $e^{(2)}$ directions, and an STD of $0.01$ in the directions $e^{(3)},\dots,e^{(10)}$.

As a parametric model for $\log p_\theta(x)$, we used an 8-layer multi-layer perceptron (MLP) of width 512 with skip connections, as illustrated in Fig.~\ref{fig:network}. 

\begin{figure}
    \centering
    \includegraphics[width=\textwidth]{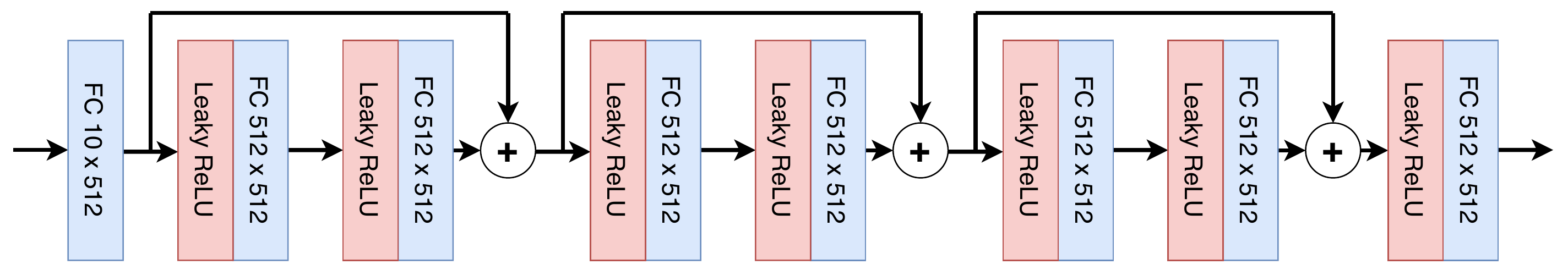}
    \caption{
        The architecture.
    }
    \label{fig:network}
\end{figure}

Throughout the paper we referred to the results of five different learning configurations.
\begin{enumerate}
    \item \textbf{CNCE with an optimal (small) variance}. This configuration uses additive Gaussian noise as its contrastive distribution. We found $0.0075$ to be the STD of the Gaussian which produces the best results.
    \item \textbf{CNCE with a large variance}. This configuration is similar to the previous one except for the STD of the Gaussian which was set to $0.3$ in order to illustrate the problems of using a conditional distribution with a large variance.
    \item \textbf{CD without any MCMC correction}. For the MCMC process we used 5 steps of Langevin dynamics, where we did not employ any correction for the inaccuracy which results from using Langavin dynamics with a finite step size. We found $0.0075$ to be the step size (multiplying the standard Gaussian noise term) which produces the best results.
    \item \textbf{CD with MH correction}. This configuration is similar to the previous one except for a MH rejection scheme which was used during the MCMC sampling. In this case we found the step size of $0.0125$ to produce the best results.
    \item \textbf{Adjusted CD}. This configuration is similar to the previous one except that we used the method from Sec.~\ref{sec:ACD} instead of MH rejection. Similarly to the previous configuration, we found the step size of $0.0125$ to produce the best results.
\end{enumerate}

The optimization of all configurations was preformed using SGD with a momentum of 0.9 and an exponential decaying learning rate. Except for the training of the third configuration, the learning rate ran down from $10^{-2}$ to $10^{-4}$ over $100000$ optimization steps. For the third configuration we had to reduce the learning rate by a factor of 10 in order to prevent the optimization from diverging.

In order to select the best step size / variance for each of the configurations we ran a parameter sweep around the relevant value range. The results of this sweep are shown in Fig. \ref{fig:step_size_sweep}.

\begin{figure}
    \centering
    \includegraphics[width=\textwidth]{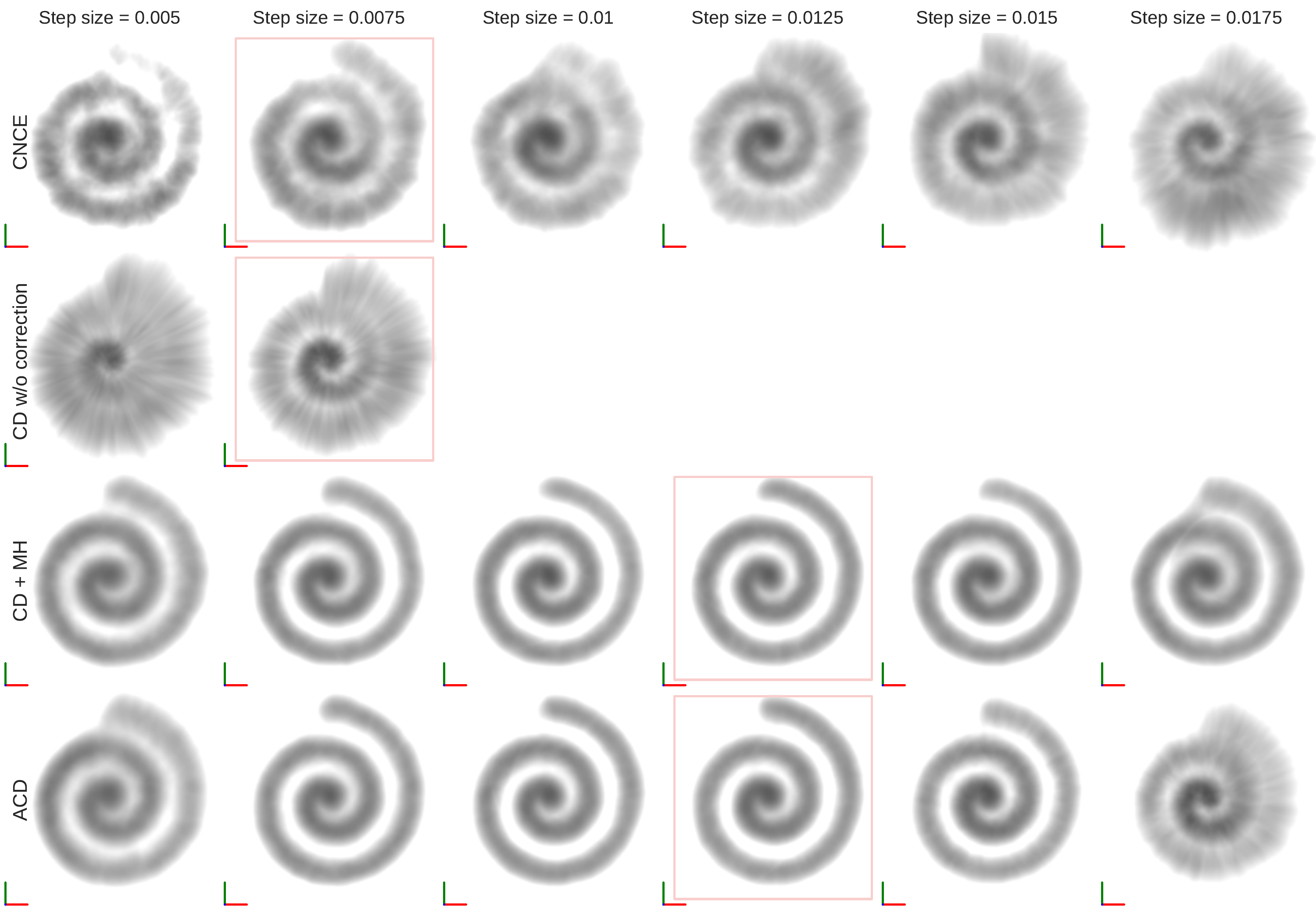}
    \caption{
        The step size sweep. In the case of CNCE, the step size is in fact the STD of the conditional distribution. For the case of CD, the training has diverged for large step sizes even after the learning rate has been significantly reduced. The highlighted figures indicate the selected step sizes.
    }
    \label{fig:step_size_sweep}
\end{figure}

For the selection of the number of training steps, we have iteratively increased the number of steps until the results stopped improving for all configurations. These results are presented in Fig. \ref{fig:training_steps_sweep}.

\begin{figure}
    \centering
    \includegraphics[width=\textwidth]{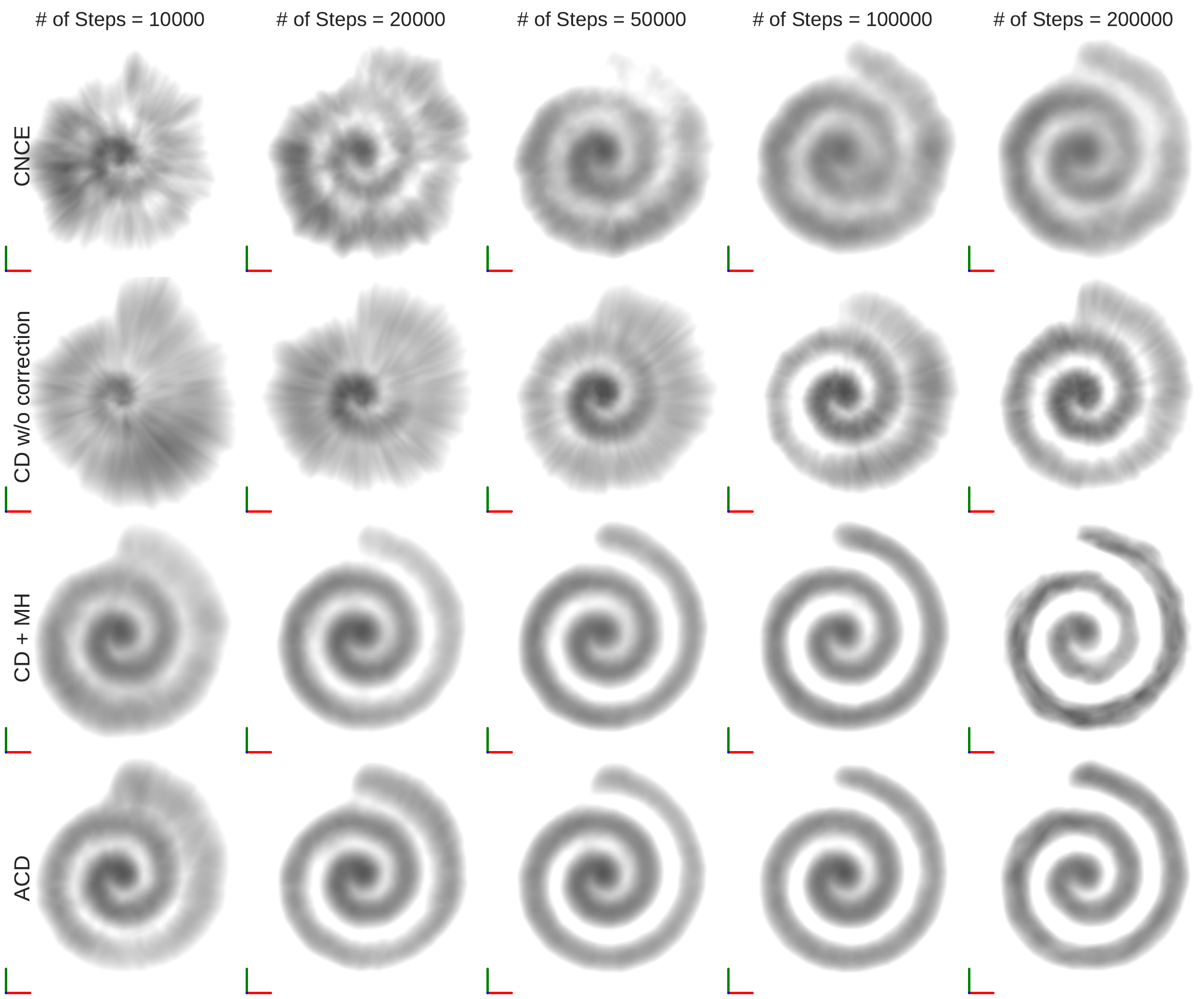}
    \caption{
        Selecting the number of training steps.
    }
    \label{fig:training_steps_sweep}
\end{figure}


\end{document}